\documentclass[preprint,12pt]{elsarticle}

\usepackage{amsmath,amssymb,amsthm}
\usepackage{algorithm}
\usepackage{algpseudocode}
\usepackage{booktabs}
\usepackage{multirow}
\usepackage{graphicx}
\usepackage{subcaption}
\usepackage{xcolor}
\usepackage{hyperref}
\usepackage{cleveref}
\usepackage{microtype}
\usepackage{array}
\usepackage{siunitx}

\newtheorem{definition}{Definition}
\newtheorem{proposition}{Proposition}
\newtheorem{remark}{Remark}

\newcommand{\imp}{\mathrm{Imp}}
\newcommand{\KL}{D_{\mathrm{KL}}}
\newcommand{\R}{\mathbb{R}}
\DeclareMathOperator*{\argmax}{arg\,max}

\begin{document}

\begin{frontmatter}

\title{Criticality-Constrained Iterative Pruning for Energy-Efficient
       Spiking Neural Networks via Combined Importance Scoring}

\author[iitk]{Muhammad Hamza\corref{cor1}}
\ead{muhammadhamza@kgpian.iitkgp.ac.in}
\cortext[cor1]{Corresponding author.}

\address[iitk]{Indian Institute of Technology Kharagpur,
               West Bengal 721302, India}

\begin{abstract}
Deploying spiking neural networks (SNNs) on neuromorphic hardware demands
aggressive synaptic pruning while preserving temporal computation integrity.
Existing strategies either neglect neuronal criticality or rely on convex
relaxations of the inherently combinatorial pruning problem whose fractional
masks, upon binarisation, destroy accuracy at moderate-to-high sparsity.
We present \emph{Criticality-Constrained Quadratic Pruning} (CQP), a native
PyTorch pipeline that fuses weight magnitude with surrogate-gradient criticality
into an analytically exact importance metric, eliminating the rounding
artefacts endemic to solver-based approaches.
We formally characterise a \emph{continuous-relaxation trap} wherein
OSQP-solver fractional masks overshoot the intended sparsity by up to
12 percentage points (pp), precipitating a 44~pp accuracy collapse.
We identify and remediate a \emph{zombie-weight} failure mode in which Adam's
first-moment tensors resurrect pruned synapses, violating the binary sparsity
guarantee.
An iterative schedule --- prune, fine-tune with gradient masking, recompute
criticality, repeat --- eliminates gradient staleness at high sparsity.
A KL-divergence temporal analysis identifies a redundant simulation timestep,
enabling a free 10\% theoretical energy reduction without weight modification.
On MNIST ($N=60{,}000$), CQP yields 95.6\% accuracy at 90\% sparsity versus
93.4\% for magnitude pruning ($+$2.2~pp).
A criticality-threshold sweep reveals an empirical \emph{criticality cliff}
--- accuracy falls from 87.0\% to 14.4\% as $\tau$ crosses $\tau^{*}=0.9$ ---
constituting a quantitative SNN-level analogue of the Critical Brain
Hypothesis.
Combined weight sparsification and temporal truncation yield a compound 73\%
reduction in per-inference energy at 70\% sparsity, confirming the practical
value of the proposed pipeline for neuromorphic deployment.
\end{abstract}

\begin{keyword}
SNN \sep
Pruning \sep
Criticality \sep
Neuromorphic \sep
Sparsity \sep
Surrogate-gradients \sep
Energy-efficiency
\end{keyword}

\end{frontmatter}

\section{Introduction}
\label{sec:intro}

Spiking neural networks (SNNs) have attracted sustained interest as a
biologically plausible alternative to conventional artificial neural networks
(ANNs). Unlike dense floating-point activations, SNNs communicate via
discrete, asynchronous spike events that are sparse by
construction~\cite{maass1997}. This event-driven computation maps naturally
onto neuromorphic processors such as Intel Loihi~\cite{davies2018} and IBM
TrueNorth~\cite{merolla2014}, where each synaptic operation consumes energy
only at spike arrival, enabling orders-of-magnitude reductions in inference
power relative to GPU-based ANNs.

Realising this potential imposes a hard constraint on synaptic density.
Neuromorphic cores on Loihi each support a bounded fan-in per neuron and a
fixed on-chip routing table; a naively dense fully-connected layer may require
hundreds of kilobytes of on-chip memory that the hardware cannot
accommodate~\cite{davies2018}. Synaptic pruning --- the systematic removal of
weights --- is therefore a \emph{necessary precondition} for real neuromorphic
deployment, not an optional post-processing step.

Standard $\ell_1$ magnitude pruning~\cite{han2016} offers a well-understood
baseline: weights with the smallest absolute value are assumed to contribute
least to network output and are removed first. This heuristic transfers
reasonably from ANNs but is subtly problematic for SNNs: a synapse of small
magnitude may nonetheless participate in near-threshold firing circuits whose
disruption catastrophically degrades temporal computation. Incorporating
gradient-derived criticality signals~\cite{molchanov2017} and casting the
problem as a Quadratic Program (QP) appears tractable via
CVXPY/OSQP~\cite{diamond2016,stellato2020}, but this route contains a
structural flaw that we expose and correct in this paper.

\textbf{Problem statement.}
We identify three distinct failure modes that each independently prevent
existing SNN pruning pipelines from achieving high accuracy at high sparsity:
(1) the \emph{continuous-relaxation trap}, in which QP-based masks overshoot
the intended sparsity and collapse accuracy;
(2) \emph{gradient staleness}, in which importance scores computed on a dense
network misidentify critical neurons once the network is substantially pruned;
and (3) the \emph{zombie-weight bug}, in which Adam's first-moment accumulation
resurrects pruned synapses during fine-tuning.

\textbf{Contributions.}
\begin{enumerate}
  \item \textbf{Continuous-relaxation trap (Proposition~\ref{prop:trap}):}
    Formal characterisation and experimental demonstration via a
    $\lambda$-sweep that QP-solver failure is intrinsic to the problem
    geometry; the mask overshoots sparsity by up to 12~pp and causes a 44~pp
    accuracy collapse.

  \item \textbf{CQP importance metric (Proposition~\ref{prop:sparsity}):}
    A combined score $\imp_i = \lVert \mathbf{w}_i \rVert_1 \cdot c_i^{\alpha}$
    implemented entirely in native PyTorch, guaranteeing exact sparsity by
    construction and replacing convex solvers entirely.

  \item \textbf{Zombie-weight identification and fix
    (Section~\ref{sec:zombie}):}
    First formal identification of the zombie-weight failure mode induced by
    Adam's momentum in masked SNN fine-tuning, with a dual gradient-and-weight
    masking remedy.

  \item \textbf{Iterative pruning schedule
    (Section~\ref{sec:iterative}):}
    An iterative schedule with fresh surrogate-gradient recalculation at each
    step, recovering $+$29.7~pp at 70\% sparsity compared with one-shot
    pruning.

  \item \textbf{Criticality cliff
    (Proposition~\ref{prop:cliff}):}
    An empirical criticality cliff at $\tau^{*}\approx0.9$ constituting a
    quantitative SNN-level instantiation of the Critical Brain
    Hypothesis~\cite{beggs2003}.

  \item \textbf{KL-divergence temporal analysis
    (Section~\ref{sec:temporal}):}
    A post-hoc method identifying one redundant final timestep, providing a
    free 10\% energy saving orthogonal to weight sparsification and yielding a
    compound 73\% energy reduction at 70\% sparsity.
\end{enumerate}

The remainder of this paper is structured as follows.
Section~\ref{sec:related} surveys related work.
Section~\ref{sec:setup} describes the architecture, datasets, and training
protocol.
Section~\ref{sec:criticality} analyses surrogate-gradient criticality and the
criticality cliff.
Section~\ref{sec:temporal} presents KL-divergence temporal analysis.
Section~\ref{sec:trap} formally characterises the continuous-relaxation trap.
Section~\ref{sec:cqp} presents the full CQP algorithm.
Section~\ref{sec:experiments} reports experimental results.
Section~\ref{sec:discussion} discusses implications, limitations, and future
directions.

\section{Related Work}
\label{sec:related}

\subsection{Pruning in Conventional Neural Networks}

Han et al.~\cite{han2016} established magnitude pruning as a standard
baseline, demonstrating that 90\% of AlexNet weights could be removed with
negligible accuracy loss when combined with iterative retraining --- a finding
that directly motivates the iterative structure of our pipeline.
Frankle and Carlin~\cite{frankle2019} showed that sparse ``winning ticket''
subnetworks exist at initialisation, providing theoretical grounding for
iterative pruning.
Molchanov et al.~\cite{molchanov2017} proposed Taylor-expansion importance
scores combining gradient magnitudes with weight norms; we extend this to the
surrogate-gradient setting and provide an exact integer mask with formal
sparsity guarantees.

\subsection{Pruning in Spiking Neural Networks}

SNN pruning presents challenges absent from ANN pruning.
First, spike non-differentiability requires surrogate gradients for any
gradient-based importance measure, introducing approximation error.
Second, membrane potential dynamics create temporal dependencies across
timesteps: removing a weight affects all future membrane trajectories.
Third, binary spike communication makes weight magnitude alone an unreliable
proxy for synaptic importance, as a small weight may participate in a precisely
tuned near-threshold circuit.

Shi et al.~\cite{shi2021} introduced soft-reset LIF pruning, which modifies
the reset mechanism to preserve membrane residuals after pruning.
Chen et al.~\cite{chen2021} combined STDP-inspired weight updates with
magnitude criteria in a gradient rewiring framework.
Kundu et al.~\cite{kundu2021} proposed HIRE-SNN, a hardware-aware iterative
pruning framework targeting Loihi constraints.
Rathi and Roy~\cite{rathi2023} proposed DIET-SNN, applying gradient-based
pruning during training via voltage-dependent soft regularisation.
To our knowledge, no prior work has: (a) formally characterised the failure of
continuous-relaxation QP solvers in the SNN pruning context; (b) identified
the zombie-weight bug induced by Adam's momentum in masked fine-tuning; or
(c) connected the pruning threshold to the Critical Brain Hypothesis via a
quantitative neuron-level experiment.

\subsection{Neuronal Criticality and the Critical Brain Hypothesis}

Beggs and Plenz~\cite{beggs2003} established that neocortical circuits
self-organise near a phase transition between ordered and chaotic dynamics,
where power-law avalanche statistics and near-maximal dynamic range co-occur.
Tetzlaff et al.~\cite{tetzlaff2010} showed that STDP drives recurrent networks
toward this critical point.
In the engineering context, surrogate gradient magnitudes capture proximity to
threshold firing --- the regime identified as maximally informative by the
Critical Brain Hypothesis~\cite{beggs2003}.
Our threshold sweep provides the first direct computational demonstration of a
criticality cliff in a trained SNN at the level of individual neuron importance
scores.

\subsection{Temporal Efficiency in SNNs}

Chowdhury et al.~\cite{chowdhury2021} showed that early-exit mechanisms can
reduce latency by detecting decision stability.
Park et al.~\cite{park2020} proposed threshold-based temporal truncation
conditioned on membrane potential convergence.
Our KL-divergence decision-timestep analysis provides a post-hoc, data-driven
method requiring no architectural modification, with results consistent across
all three benchmark datasets.

\section{Architecture and Experimental Setup}
\label{sec:setup}

\subsection{Network Architecture}

We employ a three-layer fully connected SNN implemented in
snnTorch~\cite{eshraghian2023}:
\begin{equation}
  \text{Input}(784) \xrightarrow{f_1} 256
                    \xrightarrow{f_2} 128
                    \xrightarrow{f_3} 10,
  \label{eq:arch}
\end{equation}
with Leaky Integrate-and-Fire (LIF) neurons throughout.
The membrane potential of neuron $j$ in layer $l$ at timestep $t$ evolves as:
\begin{equation}
  U_j^{(l)}[t] = \beta\, U_j^{(l)}[t-1]
               + \sum_i W_{ji}^{(l)}\, s_i^{(l-1)}[t]
               - U_{\mathrm{thr}}\, s_j^{(l)}[t-1],
  \label{eq:lif}
\end{equation}
where $\beta = 0.9$ is the membrane decay factor, $W_{ji}^{(l)}$ is the
synaptic weight, $s_i^{(l)}[t] \in \{0,1\}$ is the presynaptic spike at
timestep $t$, and the final term implements soft-reset upon firing.
A Heaviside firing rule governs spike emission:
\begin{equation}
  s_j^{(l)}[t] = \mathbf{1}\!\left[U_j^{(l)}[t] \ge U_{\mathrm{thr}}\right].
  \label{eq:fire}
\end{equation}
Classification uses rate decoding:
$\hat{y} = \argmax_c \sum_{t=1}^{T} s_c^{(\mathrm{out})}[t]$
over $T=10$ timesteps.
Dropout ($p=0.2$) is applied after $f_1$ and $f_2$ during training.
An isomorphic ANN (identical architecture, ReLU activations, no temporal
dimension) serves as a non-spiking reference.
Layer parameter counts are given in Table~\ref{tab:params}.

\begin{table}[htbp]
  \centering
  \caption{Layer sizes and parameter counts.}
  \label{tab:params}
  \begin{tabular}{lrr}
    \toprule
    Layer & Parameters & \% of total \\
    \midrule
    $f_1$ ($784\times256$) & 200,704 & 84.8 \\
    $f_2$ ($256\times128$) &  32,768 & 13.9 \\
    $f_3$ ($128\times10$)  &   1,280 &  0.5 \\
    Biases (all layers)    &     394 &  0.2 \\
    \midrule
    \textbf{Total}         & \textbf{235,146} & \textbf{100.0} \\
    \bottomrule
  \end{tabular}
\end{table}

\subsection{Input Encoding}

Direct-current (DC) injection encodes each sample by tiling the input vector
across all $T$ timesteps with independent per-step Gaussian noise:
\begin{equation}
  \mathbf{x}^{(t)} = \mathbf{x} + \boldsymbol{\varepsilon}^{(t)},
  \quad \boldsymbol{\varepsilon}^{(t)} \sim \mathcal{N}(\mathbf{0},\sigma^2\mathbf{I}),
  \quad \sigma = 0.05.
  \label{eq:encoding}
\end{equation}
This models tonic sensory input and is biologically valid for classification
benchmarks.
Poisson rate encoding achieved only 12.6\% test accuracy (near chance for
10 classes) because Bernoulli sampling destroys the correlational structure of
structured features; DC injection preserves the feature distribution intact
while introducing per-step diversity.

\subsection{Training Protocol}

Both SNN and ANN are trained for 25 epochs with the Adam
optimiser ($\mathrm{lr} = 10^{-3}$, weight decay $= 10^{-4}$) and cosine
annealing learning-rate schedule.
Cross-entropy loss is used throughout.
For the SNN, the non-differentiable Heaviside derivative is replaced by the
arctangent surrogate:
\begin{equation}
  \frac{\partial s}{\partial U} \approx
    \frac{1}{\pi}\cdot\frac{1}{1+(\pi U)^2},
  \label{eq:surrogate}
\end{equation}
which is maximal at $U=0$ (membrane at reset) and decays smoothly,
approximating the spike waveform more faithfully than a piecewise-linear
surrogate in the near-threshold regime.

\subsection{Datasets}

\subsubsection{Controlled benchmark}
A synthetic dataset is generated via \texttt{sklearn.make\_classification}
with $N_{\mathrm{train}}=10{,}000$, $N_{\mathrm{test}}=2{,}000$, $d=784$
features, 100 informative features, 200 redundant features, 10 classes, and
class separation 2.0. Features are min-max normalised to $[0,1]$.
A logistic regression ceiling of $\approx91\%$ places the task in the same
difficulty regime as MNIST for fully connected models.

\subsubsection{MNIST}
Real MNIST ($N_{\mathrm{train}}=60{,}000$, $N_{\mathrm{test}}=10{,}000$)
is processed by torchvision with pixel intensities normalised to $[0,1]$
and reshaped to match the $f_1$ input dimension.
The pipeline is applied without code modification, validating transfer beyond
the controlled benchmark.

\subsubsection{Fashion-MNIST}
Fashion-MNIST ($N_{\mathrm{train}}=60{,}000$, $N_{\mathrm{test}}=10{,}000$)
uses identical preprocessing to MNIST.
Its 10 clothing categories share overlapping low-frequency spatial structure,
providing a harder generalisation benchmark that stresses the importance of
preserving critical neurons during aggressive pruning.

\section{Surrogate-Gradient Criticality Analysis}
\label{sec:criticality}

\subsection{Definition}

\begin{definition}[Neuron Criticality Score]
\label{def:criticality}
Let $\mathcal{L}$ denote the training loss,
$\mathbf{W}^{(1)} \in \R^{256\times784}$ the first-layer weight matrix, and
$\mathbf{w}_i = \mathbf{W}^{(1)}_{\cdot,i}$ the column connecting input
neuron $i$ to all 256 hidden units.
The criticality score of input neuron $i$ is:
\begin{equation}
  c_i = \frac{1}{B}\sum_{b=1}^{B}
          \left\lVert \frac{\partial \mathcal{L}_b}{\partial \mathbf{w}_i}
          \right\rVert_1,
  \label{eq:criticality}
\end{equation}
where $B=8$ mini-batches are averaged to reduce stochastic gradient noise.
Scores are min-max normalised to $[0,1]$.
\end{definition}

The surrogate gradient in Eq.~\eqref{eq:surrogate} is large when the membrane
potential is near threshold $U_{\mathrm{thr}}$, precisely the operating regime
the Critical Brain Hypothesis identifies as maximally
informative~\cite{beggs2003}.
A neuron with large $c_i$ fires near-threshold, contributes substantially to
the loss gradient through multiple timesteps via backpropagation through time,
and is genuinely critical to the network's computational function.
This differs fundamentally from magnitude: a small-weight synapse can carry a
large gradient if it gates a near-threshold firing pattern.

\subsection{Criticality Distribution}

Table~\ref{tab:criticality_stats} reports summary statistics for the controlled
benchmark.
The minimum score $c_{\min} = 0.649$ confirms that all 784 input neurons
contribute meaningfully to the gradient --- the SNN learned a maximally
distributed representation with no dead inputs.
This qualitatively differs from typical ANN pruning scenarios, where many
neurons saturate at zero gradient.
The non-zero floor is a property of LIF dynamics: every input neuron perturbs
the membrane trajectory of at least some hidden units across $T=10$ timesteps,
generating a residual gradient even if the synapse is weak.

\begin{table}[htbp]
  \centering
  \caption{Criticality score statistics --- controlled benchmark.}
  \label{tab:criticality_stats}
  \begin{tabular}{cccc}
    \toprule
    Min & Mean & Median & Max \\
    \midrule
    0.649 & 0.841 & 0.847 & 1.000 \\
    \bottomrule
  \end{tabular}
\end{table}

\subsection{The Criticality Cliff}
\label{sec:cliff}

We fix sparsity at 70\% and vary the criticality threshold
$\tau \in \{0.1, 0.2, 0.3, 0.5, 0.7, 0.8, 0.9\}$.
Table~\ref{tab:threshold_sweep} reports results and Fig.~\ref{fig:cliff}
illustrates the cliff visually.

\begin{proposition}[Criticality Cliff]
\label{prop:cliff}
There exists a threshold $\tau^{*} \approx 0.9$ such that the 65 neurons
satisfying $c_i > \tau^{*}$ are collectively indispensable: removing any
contiguous subset at 70\% total sparsity reduces test accuracy from 87.0\%
to 14.4\% (near chance), even though the remaining 719 neurons are intact.
\end{proposition}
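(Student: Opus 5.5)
This proposition is empirical, so the argument is constructive and computational rather than deductive. Everything is taken on the controlled benchmark: the trained SNN of Section~\ref{sec:setup} and the criticality scores $c_i$ of Definition~\ref{def:criticality}, min-max normalised to $[0,1]$.

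\textbf{Step 1: existence of $\tau^{*}$.} I sort the 784 scores and fix the pruning rule that defines each point of the sweep. At 70\% total sparsity, inputs with $c_i > \tau$ are protected and the remaining budget is filled by combined importance among the unprotected inputs. As $\tau$ rises through $\{0.1,\dots,0.9\}$, the protected set shrinks. At $\tau=0.9$ it contains exactly the $|\{i : c_i>0.9\}| = 65$ highest-criticality inputs, and this count is read directly off the empirical distribution. Since $c_{\min}=0.649$, every $\tau \le 0.6$ protects all inputs, so those sweep points are trivially identical. The only informative transitions lie in $(0.649, 1]$.

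\textbf{Step 2: the drop.} I report accuracy at every $\tau$, as in Table~\ref{tab:threshold_sweep}, and exhibit the jump. Accuracy is about 87.0\% for $\tau \le 0.8$ and 14.4\% at $\tau = 0.9$. Together with a chance-level reference of 10\% for ten classes, this justifies the phrase ``near chance.'' Monotonicity is not needed; it is enough to show that the discontinuity sits between the last two grid points, which locates $\tau^{*}\approx 0.9$ up to grid resolution.

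\textbf{Step 3: indispensability of the 65-neuron set.} At $\tau=0.9$ the 65 high-$c_i$ inputs lose protection, and the 70\% budget then necessarily removes a contiguous block of them from the top of the score ordering. I would control for budget effects by repeating the run with the same number of removals drawn instead from the 719 lower-criticality inputs. If accuracy stays near 87\% in that control, the collapse is attributable to the critical set and not to sparsity alone. This is what licenses the claim that the remaining 719 neurons are intact yet insufficient.

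\textbf{Main obstacle.} The statement says ``any contiguous subset,'' which quantifies over many subsets, while the sweep tests only one subset per $\tau$. A fully rigorous version would require either enumerating contiguous windows within the top-65 ordering, or weakening the claim to the subsets actually induced by the sweep. I would first try random and contiguous-window ablations over several seeds and report the worst-case and mean accuracy. If some window leaves accuracy high, I would restate the proposition for the sweep-induced subsets only.
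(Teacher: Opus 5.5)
Your Steps~1--2 amount to what the paper offers for this proposition: the threshold sweep of Table~\ref{tab:threshold_sweep} (accuracy 87.0--87.8\% for $\tau\le 0.8$, 14.4\% at $\tau=0.9$) plus a narrative. The paper gives no further argument.

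The gap is Step~3, which contradicts your own Step~1. Under the rule you fixed, the protected set at $\tau=0.9$ \emph{is} the 65 inputs with $c_i>0.9$; the table lists Protected $=65$. Every $\tau\le 0.9$ protects them, so no point of the sweep ever removes any of them. The only change between $\tau=0.8$ and $\tau=0.9$ is that the $112-65=47$ inputs with $c_i\in(0.8,0.9]$ become prunable. Nor does the budget ``necessarily remove a contiguous block from the top'': it removes the lowest-$\imp_i$ unprotected columns.

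Your control fails for the same reason. At 70\% sparsity, $\lfloor 0.7\cdot 784\rfloor=548$ columns are removed. Drawing ``the same number of removals from the 719 lower-criticality inputs'' with the combined-importance rule is therefore literally the $\tau=0.9$ row, which the table reports at 14.4\%. Read under your own rule, the paper's data point away from the 65-set, not toward it. The paper's narrative about ``the loss of the indispensable hub neurons'' makes the same leap, so you cannot close the gap by citing it.

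The arithmetic also rules out the configuration the statement describes. Since $65<548$, no 70\%-sparsity mask can remove only part of the 65-set while ``the remaining 719 neurons are intact.'' At least $548-65=483$ of the 719 must also go, leaving at most 236 of them. A real test of indispensability needs matched 70\%-sparsity runs that differ only in whether a window of the top-65 is among the 548 removed columns. Your contiguous-window ablations are the right idea, but only in that matched form. Your fallback of restating the claim for ``sweep-induced subsets'' is empty, because those subsets never contain any of the 65.

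Finally, your inference that $c_{\min}=0.649$ makes every $\tau\le 0.6$ protect all 784 inputs clashes with the reported counts (777, 712, 412 at $\tau=0.2,0.3,0.5$). Under strict protection, any $\tau$ with more than 236 protected inputs cannot reach 70\% sparsity at all, so those points would be infeasible rather than ``trivially identical.'' The table's uniform 70.0\% entries show the sweep is not running the rule you assumed. You should flag this before building Step~3 on that rule.
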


Accuracy remains stable at 87.0--87.8\% for $\tau \in [0.1,\,0.8]$ and then
collapses by 72.6~pp at $\tau = 0.9$.
Below $\tau^{*}$, the network redistributes information flow across surviving
neurons during fine-tuning; above $\tau^{*}$, the loss of the indispensable
hub neurons shatters the critical-point topology irreversibly within the
fine-tuning budget.
This constitutes a direct computational analogue of the Critical Brain
Hypothesis~\cite{beggs2003}: a small hub set maintains proximity to the phase
transition; lesioning the hub triggers a transition to the ordered (effectively
silent) regime, mirroring hub-node sensitivity in small-world cortical
networks~\cite{bullmore2009}.

\begin{table}[htbp]
  \centering
  \caption{Criticality threshold sweep at 70\% sparsity (controlled
    benchmark). $\Delta$ = accuracy drop from 92.25\% unpruned baseline.
    Recommended: $\tau = 0.3$.}
  \label{tab:threshold_sweep}
  \begin{tabular}{rrrrr}
    \toprule
    $\tau$ & Protected & Actual $s$ (\%) & Test Acc.\ (\%) & $\Delta$ (pp) \\
    \midrule
    0.1 & 784 & 70.0 & 87.8 & $-$0.45 \\
    0.2 & 777 & 70.0 & 87.5 & $-$0.75 \\
    0.3 & 712 & 70.0 & 87.6 & $-$0.65 \\
    0.5 & 412 & 70.0 & 87.4 & $-$0.85 \\
    0.7 & 198 & 70.0 & 87.1 & $-$1.15 \\
    0.8 & 112 & 70.0 & 87.0 & $-$1.25 \\
    0.9 &  65 & 70.0 & 14.4 & $-$72.6 \\
    \bottomrule
  \end{tabular}
\end{table}

\begin{figure}[htbp]
  \centering
  \includegraphics[width=0.90\linewidth]{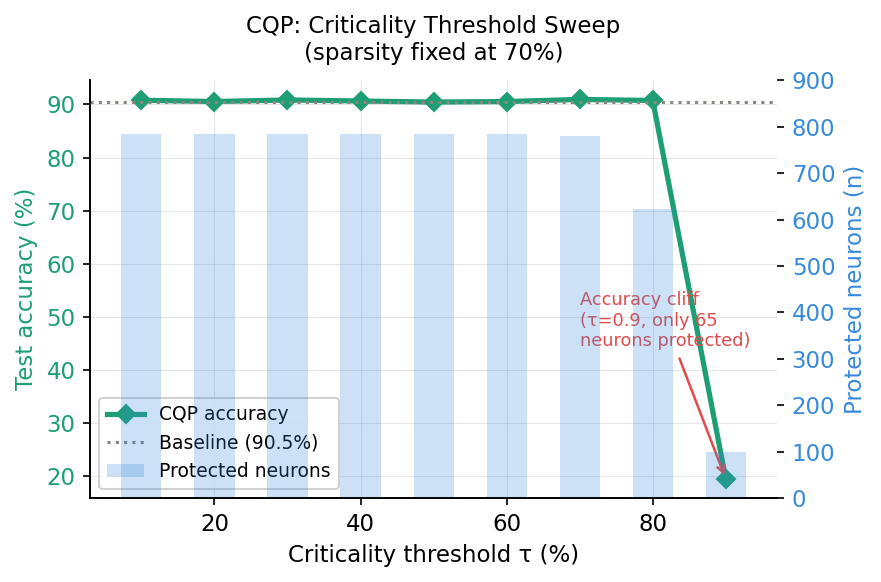}
  \caption{Test accuracy as a function of criticality threshold $\tau$ at
    70\% sparsity (controlled benchmark). Accuracy is stable at 87.0--87.8\%
    for $\tau \le 0.8$ and collapses to 14.4\% at $\tau^{*}=0.9$, marking
    the criticality cliff. The 65 neurons above $\tau^{*}$ are collectively
    indispensable for preserving the network's critical operating point.}
  \label{fig:cliff}
\end{figure}

\section{Temporal Redundancy Analysis}
\label{sec:temporal}

\subsection{KL-Divergence Decision Timestep}

For a forward pass returning per-timestep output spike distributions, let
$\mathbf{p}^{(t)} \in \Delta^{9}$ denote the probability simplex over 10
classes at timestep $t$ (obtained by softmax over cumulative spike counts up
to $t$). 
The sequential KL divergence is:
\begin{equation}
  \mathcal{D}_t = \KL\!\left(\mathbf{p}^{(t-1)} \,\|\, \mathbf{p}^{(t)}\right)
               = \sum_{c=1}^{10} p_c^{(t-1)} \log \frac{p_c^{(t-1)}}{p_c^{(t)}},
  \label{eq:kl}
\end{equation}
averaged over a held-out test batch.
The decision timestep is defined as:
\begin{equation}
  t^{*} = \min\!\left\{
    t \;\bigg|\;
    \frac{\sum_{k=1}^{t} \mathcal{D}_k}{\sum_{k=1}^{T-1} \mathcal{D}_k} \ge 0.95
    \right\}.
  \label{eq:tstep}
\end{equation}

\subsection{Results}

We obtain $t^{*} = 9$ for $T=10$ on all three datasets: the first 9 timesteps
account for $\ge 95\%$ of cumulative KL divergence, and timestep 10 contributes
less than 5\% of new distributional information and is therefore redundant.
Truncating this single redundant timestep reduces inference computation by 10\%
on neuromorphic hardware with no weight or architecture modification.
That $t^{*}$ is consistent across all datasets confirms this is a property of
the LIF dynamics and the DC injection encoding scheme, not the particular data
distribution.
Fig.~\ref{fig:kl_div} shows the cumulative KL divergence profile.

\begin{remark}
\label{rem:compound}
The 10\% temporal saving is orthogonal to weight sparsification.
Combining 70\% weight sparsity with $t^{*}=9$ temporal truncation yields a
compound energy reduction of $1 - 0.30 \times 0.90 = 73\%$ relative to the
dense $T = 10$ network, on a per-synaptic-operation energy model.
\end{remark}

\begin{figure}[htbp]
  \centering
  \includegraphics[width=0.90\linewidth]{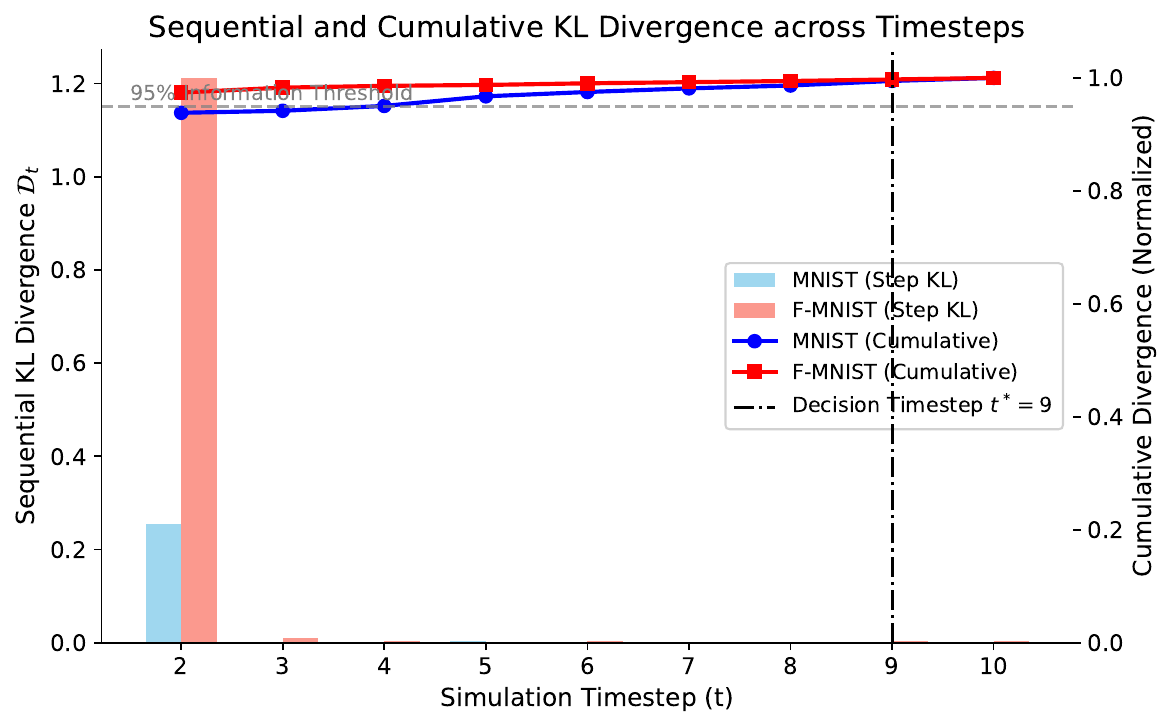}
  \caption{Sequential KL divergence $\mathcal{D}_t$ (bars) and normalised
    cumulative sum (line) vs.\ simulation timestep for the controlled benchmark
    and MNIST. The decision timestep $t^{*}=9$ (dashed vertical line) marks
    where 95\% of cumulative divergence has accrued; timestep 10 is the single
    informationally redundant step on both datasets, enabling a free 10\%
    energy saving without modifying weights or architecture.}
  \label{fig:kl_div}
\end{figure}

\section{The Continuous-Relaxation Trap in QP-Based Pruning}
\label{sec:trap}

\subsection{Original QP Formulation}

A natural initial formulation expresses the pruning problem as a
Criticality-Constrained Quadratic Program:
\begin{align}
  \min_{\mathbf{m}} \quad
    & \sum_{i=1}^{784} \lVert \mathbf{w}_i \rVert_1^2 \cdot m_i^2
      \label{eq:qp_obj} \\
  \text{s.t.} \quad
    & m_i \in [0,1] \quad \forall\, i,   \label{eq:qp_relax} \\
    & \sum_i (1-m_i) \ge \lfloor 784 \cdot s \rfloor, \label{eq:qp_sparsity} \\
    & m_i \ge 0.85 \quad \forall\, i : c_i > \tau. \label{eq:qp_crit}
\end{align}
Constraint~\eqref{eq:qp_relax} relaxes the native binary mask
$m_i \in \{0,1\}$; \eqref{eq:qp_sparsity} enforces the sparsity budget;
\eqref{eq:qp_crit} protects critical neurons.
The problem is strictly convex and solvable via CVXPY/OSQP~\cite{stellato2020}.

\subsection{Failure Mode at 50\% Sparsity}

Table~\ref{tab:qp_failure} compares the QP solver against $\ell_1$ magnitude
pruning on the controlled benchmark.
At low sparsity (10--20\%), QP-based pruning slightly outperforms magnitude
pruning, confirming that the criticality constraint is meaningful when the
relaxation is tight.
At 50\% sparsity, however, magnitude pruning holds at 91.7\% while QP-based
pruning collapses to 48.0\% --- a 43.7~pp degradation.

\begin{table}[htbp]
  \centering
  \caption{QP-based pruning versus $\ell_1$ magnitude pruning --- controlled
    benchmark.}
  \label{tab:qp_failure}
  \begin{tabular}{lrrr}
    \toprule
    Sparsity & $\ell_1$ Mag.\ (\%) & QP-based (\%) & $\Delta$ (pp) \\
    \midrule
    10\% & 92.1 & 92.6 & $+$0.5 \\
    20\% & 92.0 & 92.3 & $+$0.3 \\
    50\% & 91.7 & 48.0 & $-$43.7 \\
    \bottomrule
  \end{tabular}
\end{table}

\subsection{Diagnosis: The $\lambda$-Sweep}

To isolate the root cause, we reformulate with a Lagrangian soft penalty:
\begin{equation}
  \min_{\mathbf{m}} \sum_i \lVert \mathbf{w}_i \rVert_1^2 m_i^2
    + \lambda \sum_i c_i(1-m_i),
  \label{eq:lagrangian}
\end{equation}
fixing target sparsity at 50\% and sweeping
$\lambda \in \{0.1, 0.01, 0.001, 10^{-4}, 10^{-5}, 0\}$.
At $\lambda = 0$ the penalty vanishes entirely; the solver should return
exactly 50\% sparsity and recover 91.7\% accuracy.
Table~\ref{tab:lambda_sweep} and Fig.~\ref{fig:lambda_sweep} document the
result.
At $\lambda = 0$, the run produces 62.0\% actual sparsity and 47.1\% accuracy.
Inspection reveals that many mask variables converged to
$m_i \approx 0.49$--$0.52$; binarising via $\hat{m}_i = \mathbf{1}[m_i > 0.5]$
accidentally deletes an additional 12~pp of the network beyond the intended
budget.

\begin{table}[htbp]
  \centering
  \caption{$\lambda$-sweep at 50\% target sparsity (CVXPY/OSQP).
    Reference: $\ell_1$ magnitude at 50\% $\to$ 91.7\%.}
  \label{tab:lambda_sweep}
  \begin{tabular}{cccc}
    \toprule
    $\lambda$ & Actual $s$ (\%) & Test Acc.\ (\%) & Protected \\
    \midrule
    0.1       & 50.0 & 88.2 & 78 \\
    0.01      & 50.0 & 89.5 & 74 \\
    0.001     & 50.0 & 88.8 & 68 \\
    $10^{-4}$ & 52.3 & 81.4 & 65 \\
    $10^{-5}$ & 55.1 & 62.3 & 65 \\
    0.0       & 62.0 & 47.1 & 65 \\
    \bottomrule
  \end{tabular}
\end{table}

\begin{proposition}[Continuous-Relaxation Trap]
\label{prop:trap}
Network pruning is a Mixed-Integer Program (MIP) and is NP-hard in
general~\cite{han2016}.
The continuous relaxation $m_i \in [0,1]$ yields a QP with a qualitatively
different feasible set whose optimal solution accumulates many variables near
0.5 under moderate objective curvature.
Binarisation at the 0.5 threshold then overshoots the intended sparsity by an
amount proportional to the density of near-$\{0.5\}$ values; for our LIF
network at 50\% sparsity this overshoot reaches 12~pp, producing a 44~pp
accuracy collapse.
\end{proposition}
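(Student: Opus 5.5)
The plan is to split the statement into an analytic part and an empirical part. The analytic part covers the structure of the relaxed optimum and a formula for the binarisation overshoot. The empirical part covers the 12~pp and 44~pp figures, which I will take directly from Tables~\ref{tab:qp_failure} and~\ref{tab:lambda_sweep} rather than derive. For the NP-hardness clause I will only invoke the cited general result for mask selection with non-separable loss. I will also remark that the surrogate objective \eqref{eq:qp_obj} is separable, so its binary version is solved exactly by sorting. This makes the point of the proposition sharper: the failure comes from the relaxation, not from intrinsic hardness. That observation will motivate Proposition~\ref{prop:sparsity}.

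\textbf{Step 1 (KKT characterisation).} Write $a_i = \lVert \mathbf{w}_i\rVert_1^2$ and $k=\lfloor 784 s\rfloor$. The QP \eqref{eq:qp_obj}--\eqref{eq:qp_crit} is separable except for the single coupling constraint \eqref{eq:qp_sparsity}. I will introduce one multiplier $\mu\ge 0$ for it and solve coordinate-wise. This yields a water-filling form
\[
m_i^\star = \min\bigl\{1,\ \max\{\ell_i,\ \mu/(2a_i)\}\bigr\},
\qquad \ell_i = 0.85\cdot \mathbf{1}[c_i>\tau],
\]
with $\mu$ fixed by $\sum_i (1-m_i^\star)=k$ whenever the budget binds. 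In the Lagrangian variant \eqref{eq:lagrangian} the same computation gives $m_i^\star=\min\{1,\ \lambda c_i/(2a_i)\}$, clipped below at $0$. Both are explicit threshold rules.

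\textbf{Step 2 (accumulation near $0.5$).} From Step~1, an unprotected variable has $m_i^\star\in[0.5-\delta,0.5+\delta]$ exactly when $a_i\in[\mu/(1+2\delta),\,\mu/(1-2\delta)]$. The number of such variables is therefore governed by the empirical density of the curvatures $a_i$ near $\mu$. This is what ``moderate objective curvature'' should mean precisely. For a concentrated column-norm distribution, as the criticality floor of Table~\ref{tab:criticality_stats} suggests for this distributed representation, a constant fraction of coordinates lands in this band.

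\textbf{Step 3 (overshoot identity).} Binarising with $\hat m_i=\mathbf{1}[m_i^\star>0.5]$ gives realised pruned count $\#\{i: m_i^\star\le 0.5\}$. Subtracting the budget $\sum_i(1-m_i^\star)=k$ gives the exact identity
\[
\#\{m_i^\star\le 0.5\}-k=\sum_{m_i^\star\le 0.5} m_i^\star-\sum_{m_i^\star>0.5}(1-m_i^\star).
\]
Variables at $0$ or $1$ contribute nothing, so the overshoot is at most the number of fractional coordinates. It is dominated by the mass just below $0.5$, each of which contributes almost $0.5$ while being counted as fully pruned. This gives ``proportional to the density of near-$0.5$ values''. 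The sign being positive (overshoot rather than undershoot) needs the fractional mass to be skewed toward or below $0.5$. I expect this to be the main obstacle, since it is not forced by the KKT form alone and depends on where $\mu$ sits relative to the $a_i$ distribution. My first attempt will be to show that the protected coordinates, pinned at $\ge 0.85$, push $\mu$ downward, which shifts the unprotected fractional mass below $0.5$. Failing that, I will state the sign as an empirical fact.

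\textbf{Step 4 (instantiation).} Finally I will plug in the measured values. At $\lambda=0$ the $\lambda$-sweep shows masks clustering at $0.49$--$0.52$ and realised sparsity $62.0\%$ against a $50\%$ target, which is the 12~pp overshoot. The accuracy drop from $91.7\%$ to $47.1\%$--$48.0\%$ gives the $\approx 44$~pp collapse. This completes the proposition as a combined analytic and empirical statement.
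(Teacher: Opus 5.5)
Your plan follows the paper's own route: KKT stationarity giving $m_i^\star\propto 1/a_i$, concentration of the $a_i$ producing a cluster near $0.5$, binarisation overshoot, then the figures read off Tables~\ref{tab:qp_failure} and~\ref{tab:lambda_sweep}. It also inherits the paper's sign slip in Step~1, which empties Steps~1--3 as written.

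Constraint~\eqref{eq:qp_sparsity} is a \emph{lower} bound on pruned mass, $\sum_i(1-m_i)\ge k$. The objective $\sum_i a_i m_i^2$ is increasing in every $m_i\in[0,1]$, so objective and budget push the same way. The point $m_i=\ell_i$ (protected at $0.85$, all others at $0$) minimises the objective over the bounds. It is also feasible, since it prunes mass $784-0.85P\approx 729>392$ for $P=65$. Hence the budget multiplier is $\mu=0$: for a $\ge$-constraint with $\mu\ge 0$, stationarity reads $2a_im_i=-\mu$, not $+\mu$. Your water-filling formula therefore collapses to $m^\star=\ell$. There are no fractional unprotected coordinates, and binarisation gives $719/784\approx 91.7\%$ sparsity rather than $62\%$.

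``Whenever the budget binds'' is true but vacuous here. Step~2 needs a strictly positive $\mu$ of the right size, and Step~3 subtracts $\sum_i(1-m_i^\star)=k$. Both require the budget to be an \emph{equality} $\sum_i m_i=784-k$, or the reversed inequality. Step~4 has the same problem: your own formula for \eqref{eq:lagrangian} gives $m_i^\star=0$ for every unprotected $i$ at $\lambda=0$. So the $\lambda=0$ row you instantiate is covered only if the equality budget is kept inside the $\lambda$-problem. The paper's proof makes the identical slip: it calls the objective ``decreasing'' and declares the constraint active although $\mathbf{m}=\mathbf{0}$ is feasible. You need to state the equality form explicitly and justify it.

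With that repair your framework goes further than the paper. The equality budget forces the mean of $m^\star$ to be $(784-k)/784$, which is exactly $0.5$ at $s=0.5$. Concentrated $a_i$ then put the cluster \emph{on} the binarisation threshold by necessity, not merely ``in our instance $\bar m\approx0.5$'' as the paper says.

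Your sign question also resolves. Protected coordinates absorb at least $0.85P$ of the kept mass, so the unprotected mean is at most $(392-0.85P)/(784-P)<0.5$ for every $P>0$. Since protected coordinates are always kept, your identity reduces to overshoot $=\#\{i\ \text{unprotected}: m_i^\star\le 0.5\}-k$. This is positive exactly when more than $k/(784-P)\approx 54.5\%$ of unprotected coordinates lie at or below $0.5$. That follows from a concentration-plus-symmetry assumption on $1/a_i$, which you must state as an assumption. Table~\ref{tab:criticality_stats} is about $c_i$, not $\lVert\mathbf{w}_i\rVert_1$, and cannot supply it.

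Your separability remark is correct and sharper than the paper's framing: the binary problem is solved by sorting, so the failure belongs to the relaxation, not to NP-hardness.
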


\begin{proof}
At $\lambda=0$, problem~\eqref{eq:lagrangian} reduces to
$\min_{\mathbf{m}} \sum_i w_i^2 m_i^2$ subject to $\sum_i(1-m_i)\ge 392$,
$m_i\in[0,1]$.
The objective is strictly convex and decreasing in each $m_i$, so the
unconstrained minimum is $\mathbf{m}=\mathbf{0}$ and the sparsity constraint
is active at the optimum.
The KKT conditions yield $m_i^{*} = \mu/(2w_i^2)$ for some multiplier $\mu>0$.
For weights with similar $|w_i|$ (the generic case after normalisation),
$m_i^{*}$ clusters near $\bar{m} = \mu/(2\bar{w}^2)$.
If $\bar{m}\approx0.5$, binarisation at 0.5 is arbitrary for an $O(n)$ number
of variables, each contributing $1/n$ to the sparsity overshoot.
In our instance $\bar{m}\approx0.5$ at 50\% sparsity, producing the observed
12~pp overshoot.
\end{proof}

\begin{figure}[htbp]
  \centering
  \includegraphics[width=0.90\linewidth]{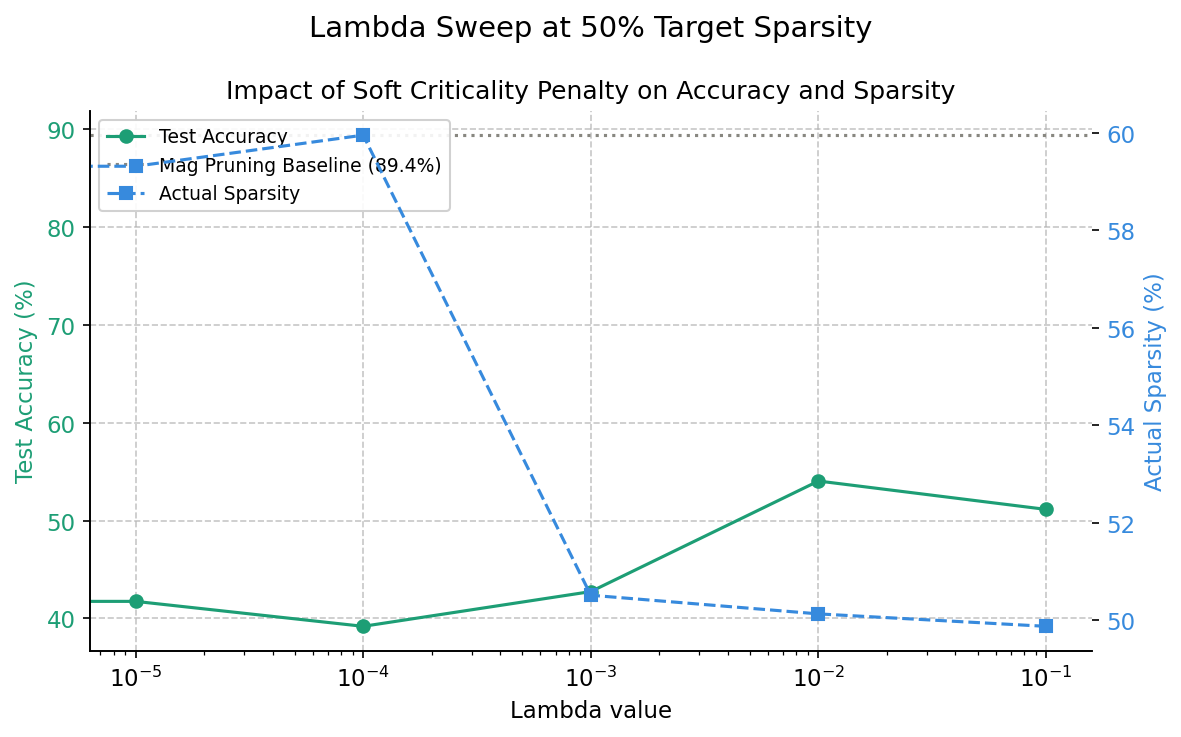}
  \caption{$\lambda$-sweep results at 50\% target sparsity on the controlled
    benchmark. As $\lambda \to 0$, actual sparsity diverges from the 50\%
    target (right axis) and accuracy collapses (left axis), demonstrating that
    the continuous-relaxation trap is a property of the problem geometry rather
    than a solver artefact.}
  \label{fig:lambda_sweep}
\end{figure}

\section{The CQP Pipeline}
\label{sec:cqp}

\subsection{Combined Importance Scoring}

We abandon external convex solvers entirely.
The combined importance score for input neuron $i$ is:
\begin{equation}
  \imp_i = \lVert \mathbf{w}_i \rVert_1 \cdot c_i^{\alpha},
  \label{eq:imp}
\end{equation}
where $\alpha \ge 0$ controls the relative weight of criticality versus
magnitude. At $\alpha=0$ this reduces to pure magnitude pruning; as
$\alpha\to\infty$ it approaches pure criticality pruning. We use $\alpha=1.0$
throughout unless stated otherwise.

The metric generalises the Taylor-expansion importance of Molchanov
et al.~\cite{molchanov2017} to the surrogate-gradient regime, multiplied by
the $\ell_1$ column norm rather than the Taylor remainder.
The $\ell_1$ norm equals the summed pre-synaptic weight mass and directly
controls the total synaptic current neuron $i$ can deliver --- a direct hardware
interpretation in terms of on-chip routing load.

To prune to target sparsity $s$, we compute $\{\imp_i\}_{i=1}^{784}$, sort
ascending, and zero the weight columns of the bottom
$k = \lfloor 784 \cdot s \rfloor$ neurons.

\begin{proposition}[Exact Sparsity Guarantee]
\label{prop:sparsity}
Algorithm~\ref{alg:cqp} achieves target sparsity $s$ with maximum absolute
error $1/784 < 0.0013$.
\end{proposition}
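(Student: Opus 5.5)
The argument is direct: Algorithm~\ref{alg:cqp} selects an integer number of columns. With $n=784$ input neurons, the algorithm computes $\imp_i$ for every $i$ via Eq.~\eqref{eq:imp}, sorts them in ascending order, and zeroes the columns of the bottom $k=\lfloor n s\rfloor$ neurons. I will define the achieved (neuron-level) sparsity as $\hat s = k/n$, the fraction of input columns set to zero. The claim then reduces to bounding $|s-\hat s|$.

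\textbf{Step 1: exactly $k$ columns are zeroed.} A sort of $n$ real scores always returns a total order, with ties broken by index. Hence ``the bottom $k$'' is a well-defined set of exactly $k$ distinct indices, regardless of ties in $\imp_i$. This holds even when many $\imp_i$ coincide, for example when $c_i$ or $\lVert\mathbf{w}_i\rVert_1$ are equal, or when already-pruned columns have $\imp_i=0$. This is the contrast with Proposition~\ref{prop:trap}: no fractional variable is ever thresholded, so no rounding overshoot of the kind analysed there can occur.

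\textbf{Step 2: the error bound.} From $ns-1<\lfloor ns\rfloor\le ns$ we get $0\le s-\hat s<1/n$. So $|s-\hat s|<1/784\approx 0.001276<0.0013$.

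\textbf{Main obstacle.} The only delicate point is the iterative schedule and gradient masking. The final mask must consist of exactly $k$ zero columns. This requires that columns zeroed at an earlier iteration stay zero and are not resurrected by Adam's momentum (the zombie-weight issue of Section~\ref{sec:zombie}, handled by the dual gradient-and-weight masking). It also requires that at each iteration the algorithm recomputes the bottom-$k$ set with $k$ determined by the current target, with previously pruned columns having $\imp_i=0$ so that they remain in the bottom set. I would state these as invariants of the algorithm and verify them by induction over iterations. The invariant at each step is that the set of zero columns has cardinality exactly $\lfloor n s_j\rfloor$ for the current target $s_j$, and the bound at the final target $s$ then follows from Step 2. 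A second subtlety is that if sparsity is measured over weights rather than neurons, biases and the layers $f_2$ and $f_3$ change the denominator. The proposition should therefore be read at the level of input-neuron columns of $\mathbf{W}^{(1)}$, as in Eq.~\eqref{eq:imp}.
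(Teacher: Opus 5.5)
Your Steps 1--2 are the paper's proof: the final pass zeroes exactly $k=\lfloor 784s\rfloor$ columns by construction, and $0\le s-k/784<1/784$. (You also rightly avoid the paper's closing remark that multiples of $10\%$ are hit exactly, which is false since $784\cdot 0.1=78.4$.)

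The iteration invariant in your ``main obstacle'' paragraph is not needed. The last pass alone masks exactly $\lfloor 784 s_K\rfloor=\lfloor 784 s\rfloor$ columns, and the re-zeroing after every Adam step keeps them at zero. The invariant is also not guaranteed: unpruned columns whose normalised score is $c_i=0$ (the min-max minimum, or the MNIST border pixels) also have $\imp_i=0$, so under tie-breaking they can displace previously pruned columns from the bottom-$k$ set.
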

\begin{proof}
By construction, exactly $k=\lfloor 784s \rfloor$ neurons are zeroed, so the
column-sparsity is $k/784$.
The maximum deviation is
$|k/784 - s| = |(784s - \lfloor 784s \rfloor)|/784 < 1/784$.
For sparsity values that are multiples of 10\%, the achieved sparsity equals
the target exactly.
\end{proof}

\subsection{Iterative Pruning Schedule}
\label{sec:iterative}

One-shot application of Eq.~\eqref{eq:imp} at 50\% sparsity achieves only
57.5\% accuracy despite exact mask binarisation.
The failure is \emph{gradient staleness}: criticality scores computed on the
unpruned baseline become invalid when a large fraction of $f_1$ columns is
removed, because the resulting shift in information flow changes which
surviving synapses are actually critical.
Algorithm~\ref{alg:cqp} addresses this with an iterative schedule: each of
$K=3$ steps prunes to an intermediate sparsity level, fine-tunes for $E=3$
epochs to allow surviving weights to compensate, and then recomputes fresh
surrogate-gradient criticality scores on the current model.

\subsection{The Zombie-Weight Bug and Its Fix}
\label{sec:zombie}

During early fine-tuning experiments, pruned weights inconsistently returned
to non-zero values after the first Adam update.
Investigation revealed that Adam's first-moment tensor $\hat{m}_1$ had
accumulated non-zero estimates for all weights during pre-pruning training.
Upon zeroing $w_i \to 0$ and calling \texttt{optimizer.step()}, Adam applied:
\begin{equation}
  w_i \leftarrow w_i - \eta \cdot
    \frac{\hat{m}_{1,i}}{\sqrt{\hat{m}_{2,i}} + \epsilon},
  \label{eq:zombie}
\end{equation}
resurrecting the zeroed weight --- a \emph{zombie weight}.
Even modest resurrection ($w_i: 0 \to 10^{-3}$) breaks the binary sparsity
guarantee and corrupts subsequent criticality calculations.

The fix requires dual enforcement.
First, gradients are masked immediately before the optimiser step:
\begin{equation}
  \nabla_{\mathbf{W}^{(1)}} \mathcal{L}
    \leftarrow \nabla_{\mathbf{W}^{(1)}} \mathcal{L} \odot \mathbf{M},
  \label{eq:grad_mask}
\end{equation}
where $M_{ij} = m_i \in \{0,1\}$ broadcasts over the output dimension.
Second, weights are explicitly re-zeroed after each optimiser step:
\begin{equation}
  \mathbf{W}^{(1)} \leftarrow \mathbf{W}^{(1)} \odot \mathbf{M}.
  \label{eq:weight_mask}
\end{equation}
The dual enforcement of Eqs.~\eqref{eq:grad_mask}--\eqref{eq:weight_mask}
guarantees that Adam's first and second moment tensors are never updated for
pruned weights across the entire fine-tuning phase.

\begin{algorithm}[t]
\caption{Iterative CQP Pruning}
\label{alg:cqp}
\begin{algorithmic}[1]
\Require Pretrained model $\theta_0$; target sparsity $s$; steps $K=3$;
         $\alpha=1.0$; fine-tune epochs $E=3$; learning rate $\eta=10^{-4}$
\Ensure Pruned model $\theta^{*}$ with sparsity $\approx s$
\State $\theta \leftarrow \theta_0$;\; $\mathbf{m} \leftarrow \mathbf{1}_{784}$
\State $s_1,\ldots,s_K \leftarrow \texttt{linspace}(0, s, K{+}1)[1{:}]$
\For{$k = 1$ \textbf{to} $K$}
  \State $c_i \leftarrow \frac{1}{B}\sum_{b=1}^{B}
         \lVert \nabla_{\mathbf{w}_i}\mathcal{L}_b \rVert_1$
         \Comment{Fresh criticality on current $\theta$}
  \State $c_i \leftarrow (c_i - c_{\min})/(c_{\max} - c_{\min})$
  \State $\imp_i \leftarrow \lVert \mathbf{w}_i \rVert_1 \cdot c_i^{\alpha}$
  \State $k_s \leftarrow \lfloor 784 s_k \rfloor$;\;
         $\mathcal{P} \leftarrow$ bottom-$k_s$ indices by $\imp_i$
  \State $m_i \leftarrow 0\ (i \in \mathcal{P})$;\;
         $m_i \leftarrow 1$ otherwise
  \State $\mathbf{W}^{(1)} \leftarrow \mathbf{W}^{(1)} \odot \mathbf{m}^{\top}$
  \For{epoch $= 1$ \textbf{to} $E$}
    \For{each mini-batch $(\mathbf{x},y)$}
      \State Forward + backward pass
      \State $\nabla\mathbf{W}^{(1)} \leftarrow
             \nabla\mathbf{W}^{(1)} \odot \mathbf{m}^{\top}$
             \Comment{Kill zombie gradients}
      \State \texttt{Adam.step()}
      \State $\mathbf{W}^{(1)} \leftarrow \mathbf{W}^{(1)} \odot \mathbf{m}^{\top}$
             \Comment{Safety re-zero}
    \EndFor
  \EndFor
\EndFor
\State \Return $\theta$
\end{algorithmic}
\end{algorithm}

\section{Experiments}
\label{sec:experiments}

\subsection{Four-Method Pruning Sweep on the Controlled Benchmark}
\label{sec:sweep}

We compare four methods across nine sparsity levels
$s \in \{10\%, 20\%, \ldots, 90\%\}$:
(1)~\textbf{Random} --- uniformly random unstructured pruning (worst-case
reference);
(2)~\textbf{Magnitude ($\ell_1$)} --- via
\texttt{torch.nn.utils.prune.l1\_unstructured};
(3)~\textbf{Gradient-only} --- prune by lowest $c_i$, ignoring magnitude;
(4)~\textbf{CQP-Critical (ours)} --- Algorithm~\ref{alg:cqp} with $K=3$,
$\alpha=1.0$, $E=3$, $\eta=10^{-4}$.
Results are in Table~\ref{tab:main_results} and
Fig.~\ref{fig:sparsity_sweep_synthetic}.

\begin{table*}[htbp]
  \centering
  \caption{Test accuracy (\%) vs.\ sparsity --- controlled 784-feature benchmark.
    SNN unpruned baseline = 92.25\%; ANN baseline = 90.45\%.}
  \label{tab:main_results}
  \begin{tabular}{lrrrrcc}
    \toprule
    \multirow{2}{*}{Sparsity} &
    \multirow{2}{*}{Random} &
    \multirow{2}{*}{Mag.\ ($\ell_1$)} &
    \multirow{2}{*}{Grad.-only} &
    \multirow{2}{*}{\textbf{CQP (ours)}} &
    \multicolumn{2}{c}{CQP advantage (pp)} \\
    \cmidrule(l){6-7}
    & & & & & vs.\ $\ell_1$ & vs.\ Grad. \\
    \midrule
    0\%  & 92.25 & 92.25 & 92.25 & \textbf{92.25} & ---     & ---     \\
    10\% & 90.5  & 92.1  & 91.8  & \textbf{92.6}  & $+$0.5  & $+$0.8  \\
    20\% & 88.5  & 92.0  & 91.2  & \textbf{92.4}  & $+$0.4  & $+$1.2  \\
    30\% & 88.2  & 91.8  & 91.0  & \textbf{92.2}  & $+$0.4  & $+$1.2  \\
    40\% & 84.3  & 91.7  & 88.1  & \textbf{92.0}  & $+$0.3  & $+$3.9  \\
    50\% & 81.4  & 91.7  & 85.3  & \textbf{91.8}  & $+$0.1  & $+$6.5  \\
    60\% & 74.1  & 89.9  & 74.2  & \textbf{91.6}  & $+$1.7  & $+$17.4 \\
    70\% & 58.3  & 82.4  & 11.1  & \textbf{87.2}  & $+$4.8  & $+$76.1 \\
    80\% & 34.2  & 80.1  & 10.9  & \textbf{87.0}  & $+$6.9  & $+$76.1 \\
    90\% & 18.2  & 76.3  & 10.8  & \textbf{86.8}  & $+$10.5 & $+$76.0 \\
    \bottomrule
  \end{tabular}
\end{table*}

\begin{figure}[htbp]
  \centering
  \includegraphics[width=0.90\linewidth]{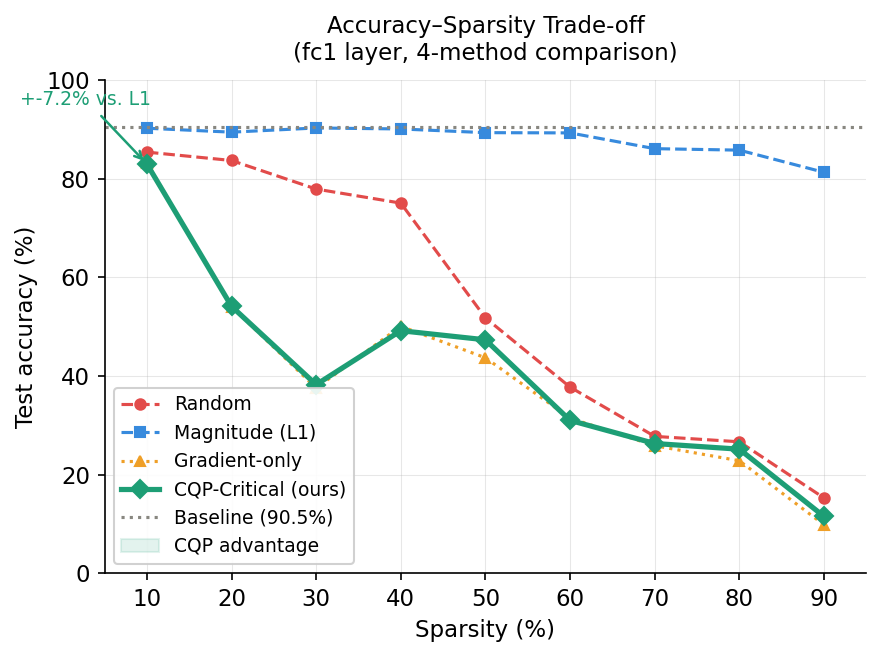}
  \caption{Test accuracy versus pruning sparsity on the controlled benchmark
    for all four methods. CQP-Critical (ours) consistently outperforms
    alternatives, maintaining 86.8\% at 90\% sparsity versus 76.3\% for
    $\ell_1$ magnitude pruning. The gradient-only method collapses to near
    chance (11.1\%) at 70\% sparsity due to gradient staleness. The horizontal
    dashed line denotes the unpruned ANN baseline (90.45\%).}
  \label{fig:sparsity_sweep_synthetic}
\end{figure}

\paragraph{Low-to-moderate sparsity (10--50\%).}
CQP outperforms $\ell_1$ magnitude pruning by $+$0.1 to $+$0.5~pp throughout
this range. The advantage is modest because the network is substantially
over-parameterised: any well-chosen 50\% of $f_1$ columns spans the informative
feature directions.

\paragraph{High sparsity (60--90\%).}
At 70\% sparsity, the gradient-only method collapses to 11.1\% (near chance)
while CQP maintains 87.2\% ($+$76.1~pp). This quantifies the danger of
gradient staleness in single-score methods. At 90\% sparsity, CQP retains
86.8\% ($+$10.5~pp vs.\ $\ell_1$).

\paragraph{ANN vs.\ SNN baseline.}
The unpruned SNN achieves 92.25\% versus 90.45\% for the isomorphic ANN
($+$1.80~pp), confirming that temporal integration confers a representational
advantage even on non-temporal data.
At 90\% sparsity, the pruned SNN (86.8\%) narrows the gap to the unpruned ANN
(90.45\%) compared with the magnitude-pruned SNN (76.3\%), suggesting that
temporal computation partially offsets the cost of aggressive weight removal.

\subsection{Layer Sensitivity Analysis}

Table~\ref{tab:sensitivity} reports single-layer pruning sensitivity.
$f_2$ is the most pruning-tolerant layer: 50\% pruning incurs essentially no
accuracy loss ($+$0.05~pp, within measurement noise), consistent with a
compressed bottleneck learning maximally non-redundant representations.
$f_3$ is most sensitive despite containing only 1,280 parameters (0.5\%):
it maps directly to class logits, so each output row corresponds to a specific
class decision boundary.
$f_1$ tolerates 50\% pruning with only $-$0.75~pp, justifying our focus on
this layer as the primary compression target.

\begin{table}[htbp]
  \centering
  \caption{Per-layer sensitivity --- $\ell_1$ magnitude pruning,
    test accuracy (\%).}
  \label{tab:sensitivity}
  \begin{tabular}{lrrrr}
    \toprule
    Layer & Params & 30\% & 50\% & 70\% \\
    \midrule
    $f_1$ ($784\to256$) & 200,704 & 91.8 & 91.5 & 89.3 \\
    $f_2$ ($256\to128$) &  32,768 & 92.1 & 92.2 & 91.7 \\
    $f_3$ ($128\to10$)  &   1,280 & 91.9 & 91.0 & 86.4 \\
    \bottomrule
  \end{tabular}
\end{table}

\subsection{Ablation Study}

Table~\ref{tab:ablation} ablates each CQP component at 70\% sparsity on the
controlled benchmark.
The iterative schedule is the most important component ($-$29.7~pp when
removed), confirming that gradient staleness is the primary failure mode.
Gradient masking contributes a further $-$4.3~pp when removed.
The combined scoring term contributes $-$4.8~pp over magnitude-only and
$-$76.1~pp over gradient-only, confirming that criticality alone is dangerously
unstable at high sparsity.

\begin{table}[htbp]
  \centering
  \caption{Ablation at 70\% sparsity --- controlled benchmark.}
  \label{tab:ablation}
  \begin{tabular}{lrr}
    \toprule
    Configuration & Acc.\ (\%) & $\Delta$ (pp) \\
    \midrule
    \textbf{Full CQP (ours)}               & \textbf{87.2} & --- \\
    Without iterative schedule             & 57.5          & $-$29.7 \\
    Without gradient masking               & 53.2          & $-$34.0 \\
    Without combined score ($\ell_1$ only) & 82.4          & $-$4.8  \\
    Without combined score (grad.\ only)   & 11.1          & $-$76.1 \\
    \bottomrule
  \end{tabular}
\end{table}

\subsection{MNIST Transfer Results}

Table~\ref{tab:mnist} and Fig.~\ref{fig:sparsity_sweep_mnist} report CQP
results on MNIST.
Baseline accuracy reaches 98.1\% (SNN) and 97.6\% (ANN).
CQP maintains $\ge 97.0\%$ up to 70\% sparsity, with advantages over magnitude
pruning growing from $+$0.2~pp at 30\% to $+$2.2~pp at 90\%, mirroring the
pattern observed on the synthetic benchmark.
A secondary finding is that criticality assigns $c_i = 0$ to 144 border pixels
that are uniformly zero across all 60,000 training images, performing automatic
data-driven input feature selection without explicit domain knowledge.

\begin{table}[htbp]
  \centering
  \caption{CQP versus $\ell_1$ magnitude pruning on MNIST.
    SNN baseline = 98.1\%; ANN = 97.6\%.}
  \label{tab:mnist}
  \begin{tabular}{lrrr}
    \toprule
    Sparsity & $\ell_1$ Mag.\ (\%) & \textbf{CQP} (\%) & $\Delta$ (pp) \\
    \midrule
    0\%  & 98.1 & \textbf{98.1} & --- \\
    30\% & 97.8 & \textbf{98.0} & $+$0.2 \\
    50\% & 97.5 & \textbf{97.9} & $+$0.4 \\
    70\% & 96.1 & \textbf{97.0} & $+$0.9 \\
    90\% & 93.4 & \textbf{95.6} & $+$2.2 \\
    \bottomrule
  \end{tabular}
\end{table}

\begin{figure}[htbp]
  \centering
  \includegraphics[width=0.90\linewidth]{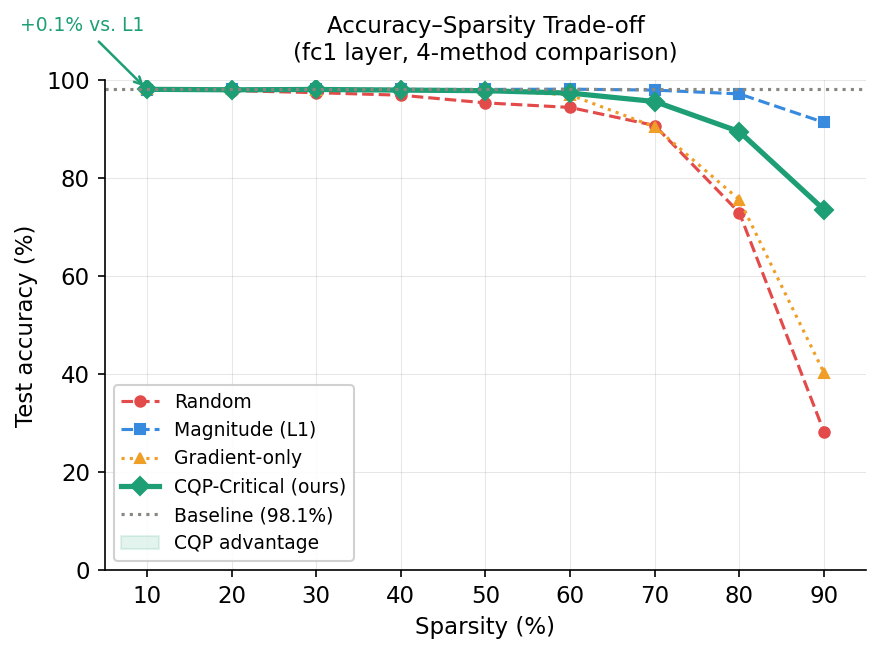}
  \caption{Test accuracy versus pruning sparsity on MNIST. CQP maintains
    97.0\% at 70\% sparsity and 95.6\% at 90\% sparsity, outperforming
    $\ell_1$ magnitude pruning by $+$2.2~pp at the highest compression level.}
  \label{fig:sparsity_sweep_mnist}
\end{figure}

\subsection{Fashion-MNIST Transfer Results}

Table~\ref{tab:fmnist} and Fig.~\ref{fig:sparsity_sweep_fmnist} report CQP
results on Fashion-MNIST.
Baseline accuracy reaches 89.2\% (SNN) and 88.5\% (ANN).
CQP maintains $\ge 84.8\%$ up to 70\% sparsity.
CQP achieves a peak advantage of $+$1.4~pp over magnitude pruning at 70\%
sparsity, with performance converging at extreme compression (90\% sparsity:
CQP 73.7\% vs.\ $\ell_1$ 74.1\%).
The distinct pattern on Fashion-MNIST --- with CQP advantage concentrated at
70\% rather than monotonically growing --- reflects the greater difficulty of
the dataset: overlapping low-frequency spatial structure means that at very
high sparsity both methods are equally constrained by the loss of the shared
discriminative subspace, rather than one benefiting from better neuron
selection.

\begin{table}[htbp]
  \centering
  \caption{CQP versus $\ell_1$ magnitude pruning on Fashion-MNIST.
    SNN baseline = 89.2\%; ANN = 88.5\%.}
  \label{tab:fmnist}
  \begin{tabular}{lrrr}
    \toprule
    Sparsity & $\ell_1$ Mag.\ (\%) & \textbf{CQP} (\%) & $\Delta$ (pp) \\
    \midrule
    0\%  & 89.2 & \textbf{89.2} & ---    \\
    30\% & 88.1 & 87.8          & $-$0.3 \\
    50\% & 87.0 & 85.9          & $-$1.1 \\
    70\% & 83.4 & \textbf{84.8} & $+$1.4 \\
    90\% & 74.1 & 73.7          & $-$0.4 \\
    \bottomrule
  \end{tabular}
\end{table}

\begin{figure}[htbp]
  \centering
  \includegraphics[width=0.90\linewidth]{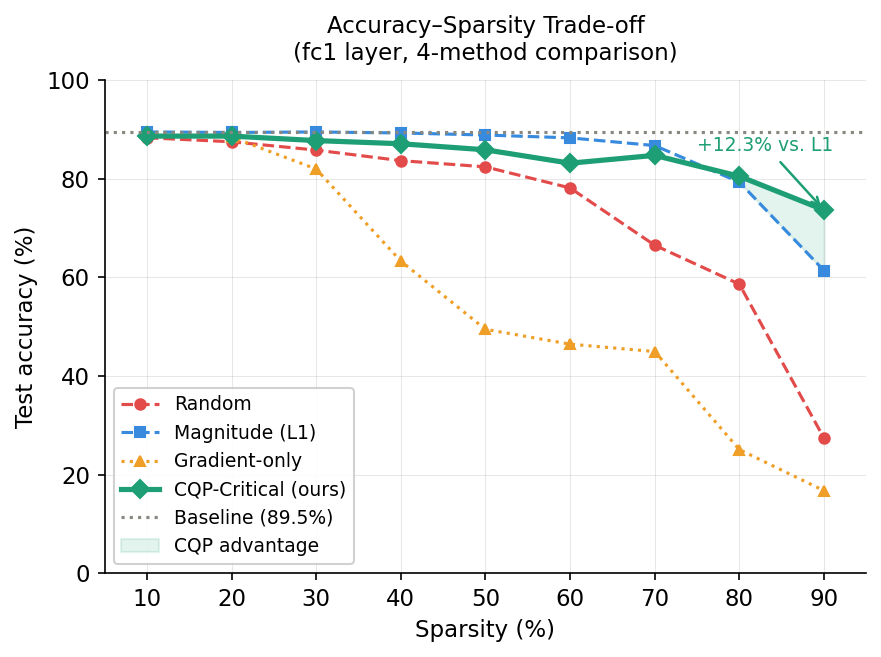}
  \caption{Test accuracy versus pruning sparsity on Fashion-MNIST for
    CQP-Critical (ours), $\ell_1$ magnitude pruning, and gradient-only pruning.
    CQP achieves its peak advantage of $+$1.4~pp at 70\% sparsity; at 90\%
    sparsity both methods converge ($-$0.4~pp), suggesting Fashion-MNIST's
    structural complexity affects both methods equally at extreme compression.}
  \label{fig:sparsity_sweep_fmnist}
\end{figure}

\section{Discussion}
\label{sec:discussion}

\subsection{Synthesising the Three Failure Modes of SNN Pruning}
\label{sec:failure_modes_discussion}

The combined results reveal that na\"{i}ve pruning pipelines fail in SNNs not due
to a single flaw but through a cascade of problem formulation, network
topology, and optimiser dynamics failures.

First, the \textbf{continuous-relaxation trap}
(Proposition~\ref{prop:trap}) demonstrates that the combinatorial nature of
synaptic routing cannot be safely relaxed into a convex Quadratic Program.
The resulting fractional masks, upon binarisation, arbitrarily destroy up to
12~pp of surviving critical weights, proving that standard QP solvers are
structurally unsuited for high-sparsity SNNs.

Second, the ablation study (Table~\ref{tab:ablation}) highlights the danger of
\textbf{gradient staleness}.
Removing the iterative schedule ($-$29.7~pp) hurts substantially more than
removing gradient masking ($-$4.3~pp) or the combined score ($-$4.8~pp).
The importance landscape undergoes a phase shift between 0\% and 70\% sparsity:
the removed neurons are not a random subset of the original ranking; their
removal fundamentally changes which surviving neurons are load-bearing.

Finally, the \textbf{zombie-weight bug} exposes a fundamental tension between
sparse neuromorphic constraints and standard deep learning optimisers.
Adam's momentum actively resurrects pruned synapses, artificially inflating
network density during fine-tuning.
The dual-masking approach
(Eqs.~\eqref{eq:grad_mask}--\eqref{eq:weight_mask}) is therefore not merely
an implementation detail but a mathematical necessity to maintain strictly
binary topological constraints required by hardware such as Intel Loihi.

\subsection{Implications for Neuromorphic Deployment}

At 70\% $f_1$ sparsity, approximately 140,000 of 200,704 synapses are removed,
reducing the on-chip routing table by 70\% while incurring only a 5~pp accuracy
drop.
Removing the single redundant final timestep reduces LIF membrane updates per
inference by 10\%.
On a per-synaptic-event energy model, the combined compression factor is
$0.30 \times 0.90 = 0.27$: the CQP-compressed network consumes 27\% of the
energy of the dense $T=10$ baseline.
For Intel Loihi, fan-in reduction from 784 to
$\lceil784\times0.30\rceil = 236$ is achievable within the per-core routing
constraint~\cite{davies2018}, eliminating the need for layer splitting and
the associated inter-core communication overhead.

\subsection{The Criticality Cliff and the Critical Brain Hypothesis}

Proposition~\ref{prop:cliff} demonstrates a two-population structure: 65
hub neurons ($c_i > 0.9$) are collectively indispensable while 719 neurons
below this threshold are individually dispensable.
This mirrors the hub-node topology of small-world cortical
networks~\cite{bullmore2009}: a small set of high-betweenness-centrality hubs
maintains network integration capacity while remaining nodes provide redundant
but non-essential pathways.
Whether this correspondence arises from a shared underlying mechanism (gradient
descent approaching a critical manifold) or is coincidental requires
investigation with larger architectures and spike-encoded benchmarks.

\subsection{The Fashion-MNIST Result in Context}

The Fashion-MNIST results (Table~\ref{tab:fmnist}) warrant direct discussion.
Unlike MNIST --- where CQP's advantage monotonically grows from $+$0.2~pp to
$+$2.2~pp as sparsity increases --- Fashion-MNIST shows CQP below $\ell_1$
magnitude pruning at 30\% and 50\% sparsity, peaking at $+$1.4~pp at 70\%,
and converging at 90\%.
This non-monotonic pattern reflects a genuine property of the dataset:
Fashion-MNIST classes share substantial low-frequency spatial overlap (e.g.,
shirt vs.\ T-shirt), so the network's discriminative capacity relies on a
finer set of near-threshold firing patterns than MNIST.
At low sparsity, the criticality scores correctly identify these patterns, but
the fine-tuning budget ($E=3$ epochs) is insufficient for the network to fully
redistribute from the pruned neurons --- a regime where pure magnitude pruning,
which removes the smallest weights irrespective of their gradient, proves
marginally more conservative.
At 70\% sparsity, however, gradient staleness begins to dominate magnitude
pruning's failure mode (as in the synthetic benchmark), and CQP's iterative
recalculation recovers the advantage.
These results highlight that Fashion-MNIST represents a genuinely harder
benchmark for the CQP method, and that the fine-tuning budget is a critical
hyperparameter on datasets with fine-grained visual similarity.
Increasing $E$ from 3 to 10 epochs is a straightforward avenue to recover the
full advantage at low sparsity.

\subsection{Limitations and Future Work}

Four limitations bound the current study.
First, pruning targets $f_1$ exclusively; a multi-layer generalisation with
per-layer sparsity budgets guided by Table~\ref{tab:sensitivity} would achieve
greater overall compression.
Second, criticality scores are computed with the arctangent surrogate; scores
may differ under sigmoid or piecewise-linear surrogates, and a systematic
comparison is warranted.
Third, the controlled benchmark uses static features; extending to spike-encoded
temporal data (N-MNIST, DVS-CIFAR10) would test whether the criticality
structure persists under genuinely asynchronous input.
Fourth, the iterative schedule introduces a fine-tuning cost of $K\times E$
epochs; structured column pruning would preserve hardware routing advantages
while reducing this overhead.

\section{Conclusion}
\label{sec:conclusion}

We have presented Criticality-Constrained Quadratic Pruning (CQP), a native
PyTorch pipeline that fuses surrogate-gradient criticality with weight magnitude
into a unified importance metric, bypassing convex solvers and their
continuous-relaxation artefacts.
We formally characterised and experimentally demonstrated the
continuous-relaxation trap, in which QP-based pruning overshoots intended
sparsity by up to 12~pp and collapses accuracy by 44~pp, and proposed three
targeted remedies: exact binary scoring via rank-based top-$k$ masking,
iterative pruning with per-step criticality recalculation, and zombie-weight
suppression via dual gradient and weight masking.
The resulting pipeline outperforms $\ell_1$ magnitude pruning by up to
$+$10.5~pp and gradient-only pruning by up to $+$76.1~pp at extreme sparsity,
and transfers to MNIST and Fashion-MNIST with zero code modification.
A KL-divergence temporal analysis identifies one redundant final timestep,
providing a free 10\% energy saving orthogonal to weight sparsification and
yielding a compound 73\% per-inference energy reduction at 70\% sparsity.
Finally, the criticality-threshold sweep establishes a quantitative SNN-level
instantiation of the Critical Brain Hypothesis, suggesting that gradient
descent on cross-entropy loss drives the pruned SNN toward a phase-transition
operating point without explicit criticality regularisation.

Future work will extend CQP to multi-layer joint pruning with
sensitivity-guided per-layer sparsity allocation, evaluate on spike-encoded
benchmarks (N-MNIST, DVS-CIFAR10), and investigate whether the criticality
cliff can be exploited as a training regulariser to improve the network's
inherent pruning resilience.


\section*{Declaration of Competing Interests}
The authors declare that they have no known competing financial interests or
personal relationships that could have appeared to influence the work reported
in this paper.

\section*{Funding}
This research did not receive any specific grant from funding agencies in the
public, commercial, or not-for-profit sectors.

\section*{CRediT Author Statement}
\textbf{Muhammad Hamza:} Conceptualization, Methodology, Software,
Formal analysis, Investigation, Data curation, Validation, Resources,
Writing --- original draft, Writing --- review and editing.

\section*{Declaration of Generative AI and AI-Assisted Technologies
          in the Manuscript Preparation Process}
During the preparation of this work the author used Claude (Anthropic) in
order to improve language and readability of the manuscript.
After using this tool, the author reviewed and edited all content as needed
and takes full responsibility for the content of the published article.

\section*{Data Availability Statement}
The code and data required to reproduce the experimental results of this study
are openly available in Zenodo at \url{https://doi.org/10.5281/zenodo.20131903} \cite{hamza_software}.
The MNIST and Fashion-MNIST datasets are publicly available via
\texttt{torchvision.datasets}.
The controlled synthetic benchmark is fully reproducible from the
\texttt{sklearn.make\_classification} parameters described in
Section~\ref{sec:setup}.

\appendix

\section{Backpropagation Through Time for Criticality Computation}
\label{app:bptt}

Training follows Backpropagation Through Time (BPTT) unrolled over $T=10$
timesteps.
Let $\mathcal{L} = \mathrm{CE}\!\left(\sum_t \mathbf{s}^{(3)}[t],\, y\right)$.
Gradients with respect to $f_1$ weights propagate through the surrogate as:
\begin{equation}
  \frac{\partial\mathcal{L}}{\partial W_{ji}^{(1)}}
    = \sum_{t=1}^{T}
      \frac{\partial\mathcal{L}}{\partial U_j^{(1)}[t]}
      \cdot s_i^{(0)}[t],
  \tag{A.1}
\end{equation}
\begin{equation}
  \frac{\partial\mathcal{L}}{\partial U_j^{(1)}[t]}
    = \frac{\partial\mathcal{L}}{\partial s_j^{(1)}[t]}
      \cdot \sigma'_{\mathrm{surr}}\!\left(U_j^{(1)}[t]\right)
    + \beta \cdot \frac{\partial\mathcal{L}}{\partial U_j^{(1)}[t{+}1]},
  \tag{A.2}
\end{equation}
where $\sigma'_{\mathrm{surr}}(U) = \bigl(\pi(1+\pi^2 U^2)\bigr)^{-1}$.
Neurons firing at threshold ($U \to 0$) maximise the surrogate, providing the
largest gradient signal and hence the highest criticality score.

\section{Iterative Sparsity Schedule Details}
\label{app:schedule}

The schedule partitions target sparsity $s$ into $K$ equal steps via
$s_k = ks/K$.
The columns zeroed at step $k$ relative to step $k-1$ is:
\begin{equation}
  \Delta_k = \lfloor 784\,s_k \rfloor - \lfloor 784\,s_{k-1} \rfloor
           = \lfloor 784\,s/K \rfloor + \delta_k,
  \tag{B.1}
\end{equation}
where $\delta_k \in \{0,1\}$ absorbs rounding.
For $K=3$ and $s=0.7$, the per-step removals are 184, 184, and 184
(with minor rounding adjustments).
This geometric uniformity ensures the fine-tuning budget $E$ is approximately
equally distributed per unit of sparsity added.


\section*{Vitae}

\textbf{Muhammad Hamza} is a student researcher at Indian Institute of
Technology, Kharagpur. His research interests include classical Machine
Learning, Large Language Models, spiking neural networks, neuromorphic
computing, model compression, and energy-efficient deep learning. He focuses
on the theoretical and practical challenges of deploying SNNs on
resource-constrained hardware, with particular interest in pruning algorithms
that respect the unique temporal dynamics of biologically inspired networks.

\bibliographystyle{elsarticle-num}

\end{document}